\documentclass{article}

\PassOptionsToPackage{numbers,compress}{natbib}
\PassOptionsToPackage{dvipsnames}{xcolor}

\usepackage[preprint]{neurips_2020}

\usepackage[utf8]{inputenc} 
\usepackage[T1]{fontenc}    
\usepackage{hyperref}       
\usepackage{url}            
\usepackage{booktabs}       
\usepackage{amsfonts}       
\usepackage{nicefrac}       
\usepackage{microtype}      

\usepackage{bm}
\usepackage{subfig}
\usepackage{graphicx}
\usepackage{booktabs}
\usepackage{amssymb,amsthm,amsmath}
\urlstyle{same}
\usepackage{multirow}
\usepackage{makecell}
\usepackage{textcomp}
\usepackage{array}
\usepackage{setspace}

\usepackage[linesnumbered,ruled,vlined]{algorithm2e}
\SetKwRepeat{Do}{do}{while}
\DontPrintSemicolon

\SetCommentSty{commentsty}

\usepackage{mathtools}

\DeclarePairedDelimiterX{\infdivx}[2]{(}{)}{%
  #1\;\delimsize\|\;#2%
}

\newtheorem{theorem}{Theorem}[section]

\theoremstyle{remark}
\newtheorem*{remark}{Remark}

\usepackage[dvipsnames]{xcolor}
\colorlet{CPPlinkcolor}{violet!85!black}
\colorlet{CPPcitecolor}{YellowOrange!85!black}
\colorlet{CPPurlcolor}{Aquamarine!85!black}
\hypersetup{
	linkcolor	= CPPlinkcolor,
	citecolor	= CPPcitecolor,
	urlcolor	= CPPurlcolor,
	colorlinks	= true,
	breaklinks	= true
}

\DeclareMathOperator{\GC}{GC}
\DeclareMathOperator{\MP}{\widehat{MP}}

\title{Deep Graph Contrastive Representation Learning}

%

\author{%
Yanqiao Zhu\textsuperscript{1,2}\thanks{The first two authors contributed equally to this work.} \quad Yichen Xu\textsuperscript{3}\footnotemark[1]\textsuperscript{~~,}\thanks{This work is done during his internship at CRIPAC, CASIA.} \quad Feng Yu\textsuperscript{1,2}\quad Qiang Liu\textsuperscript{4,5}\quad Shu Wu\textsuperscript{1,2}\quad Liang Wang\textsuperscript{1,2} \\
\textsuperscript{1} Center for Research on Intelligent Perception and Computing\\ Institute of Automation, Chinese Academy of Sciences\\
\textsuperscript{2} School of Artificial Intelligence, University of Chinese Academy of Sciences\\
\textsuperscript{3} School of Computer Science, Beijing University of Posts and Telecommunications\\
\textsuperscript{4} RealAI \quad \textsuperscript{5} Tsinghua University\\
\small\texttt{yanqiao.zhu@cripac.ia.ac.cn, linyxus@bupt.edu.cn}\\
\small\texttt{\{feng.yu, shu.wu, wangliang\}@nlpr.ia.ac.cn, qiang.liu@realai.ai} \\
}

\begin{document}

\maketitle

\begin{abstract}
Graph representation learning nowadays becomes fundamental in analyzing graph-structured data.
Inspired by recent success of contrastive methods, in this paper, we propose a novel framework for unsupervised graph representation learning by leveraging a contrastive objective at the node level. Specifically, we generate two graph views by corruption and learn node representations by maximizing the agreement of node representations in these two views. To provide diverse node contexts for the contrastive objective, we propose a hybrid scheme for generating graph views on both structure and attribute levels. Besides, we provide theoretical justification behind our motivation from two perspectives, mutual information and the classical triplet loss.
We perform empirical experiments on both transductive and inductive learning tasks using a variety of real-world datasets. Experimental experiments demonstrate that despite its simplicity, our proposed method consistently outperforms existing state-of-the-art methods by large margins. 
Moreover, our unsupervised method even surpasses its supervised counterparts on transductive tasks, demonstrating its great potential in real-world applications.

\end{abstract}

\section{Introduction}

Over the past few years, graph representation learning has emerged as a powerful strategy for analyzing graph-structured data. Graph representation learning aims to learn an encoding function that transforms nodes to low-dimensional dense embeddings that preserve graph attributive and structural features.
Traditional unsupervised graph representation learning approaches, such as DeepWalk \cite{Perozzi:2014ib} and node2vec \cite{Grover:2016ex}, follow a \emph{contrastive} framework originated in the skip-gram model \cite{Mikolov:2013uz}.
Specifically, they first sample short random walks and then enforce neighboring nodes on the same walk to share similar embeddings by contrasting them with other nodes. However, DeepWalk-based methods can be seen as reconstructing the graph proximity matrix, such as high-order adjacent matrix \cite{Qiu:2018ez}, which excessively emphasize proximity information defined on the network structure \cite{Ribeiro:2017ji}.

Recently, graph representation learning using Graph Neural Networks (GNN) has received considerable attention. Along with its prosperous development, however, there is an increasing concern over the label availability when training the model.
Nevertheless, existing GNN models are mostly established in a supervised manner \cite{Kipf:2016tc,Velickovic:2018we,Hu:2019vq}, which require abundant labeled nodes for training. Albeit with some attempts connecting previous unsupervised objectives (i.e., matrix reconstruction) to GNN models \cite{Kipf:2016ul,Hamilton:2017tp}, these methods still heavily rely on the preset graph proximity matrix.

Instead of optimizing the reconstruction objective, visual representation learning leads to revitalization of the classical information maximization (InfoMax) principle \cite{Linsker:1988ho}. A series of contrastive learning methods have been proposed so far \cite{Wu:2018kw,Tian:2019vw,He:2020tu,Bachman:2019wp,Ye:2019we,Chen:2020wj}, which seek to maximize the Mutual Information (MI) between the input (i.e., images) and its representations (i.e., image embeddings) by contrasting positive pairs with negative-sampled counterparts.
Inspired by previous success of the Deep InfoMax (DIM) method \cite{Bachman:2019wp} in visual representation learning, Deep Graph InfoMax (DGI) \cite{Velickovic:2019tu} proposes an alternative objective based on MI maximization in the graph domain. DGI firstly employs GNN to learn node embeddings and obtains a global summary embedding (i.e., the graph embedding), via a readout function. The objective used in DGI is then to maximize the MI between node embeddings and the graph embedding by discriminating nodes in the original graph from nodes in a corrupted graph.

However, we argue that the local-global MI maximization framework in DGI is still in its infancy. Its objective is proved to be equivalent to maximizing the MI between input node features and high-level node embeddings under some conditions. 
Specifically, to implement the InfoMax objective, DGI requires an injective readout function to produce the global graph embedding, where the injective property is too restrictive to fulfill.
For the mean-pooling readout function employed in DGI, it is not guaranteed that the graph embedding can distill useful information from nodes, as it is insufficient to preserve distinctive features from node-level embeddings.
Moreover, DGI proposes to use feature  shuffling to generate corrupted views of graphs.
Nevertheless, this scheme considers corrupting node features at a coarse-grained level when generating negative node samples. When the feature matrix is sparse, performing feature shuffling only is insufficient to generate different neighborhoods (i.e., contexts) for nodes in the corrupted graph, leading to difficulty in learning of the contrastive objective.


In this paper, we introduce a simple yet powerful contrastive framework for unsupervised graph representation learning (Figure \ref{fig:model}), which we refer to as deep \underline{GRA}ph \underline{C}ontrastive r\underline{E}presentation learning (GRACE)\footnote{Code is made publicly available at \texttt{\url{https://github.com/CRIPAC-DIG/GRACE}}.}, motivated by a traditional self-organizing network \cite{Becker:1992ts} and its recent renaissance in visual representation learning \cite{Chen:2020wj}.
Rather than contrasting node-level embeddings to global ones, we primarily focus on contrasting embeddings at the node level and our work makes no assumptions on injective readout functions for generating the graph embedding.
In GRACE, we first generate two correlated \emph{graph views} by randomly performing \emph{corruption}. Then, we train the model using a contrastive loss to maximize the agreement between node embeddings in these two views.
Unlike visual data, where abundant image transformation techniques are available, how to perform corruption to generate views for graphs is still an open problem.
In our work, we jointly consider corruption at both topology and node attribute levels, namely removing edges and masking features, to provide diverse contexts for nodes in different views, so as to boost optimization of the contrastive objective.
Last, we provide theoretical analysis that reveals the connections from our contrastive objective to mutual information and the classical triplet loss.

\begin{figure}[t]
	\centering
	\includegraphics[width=0.9\textwidth]{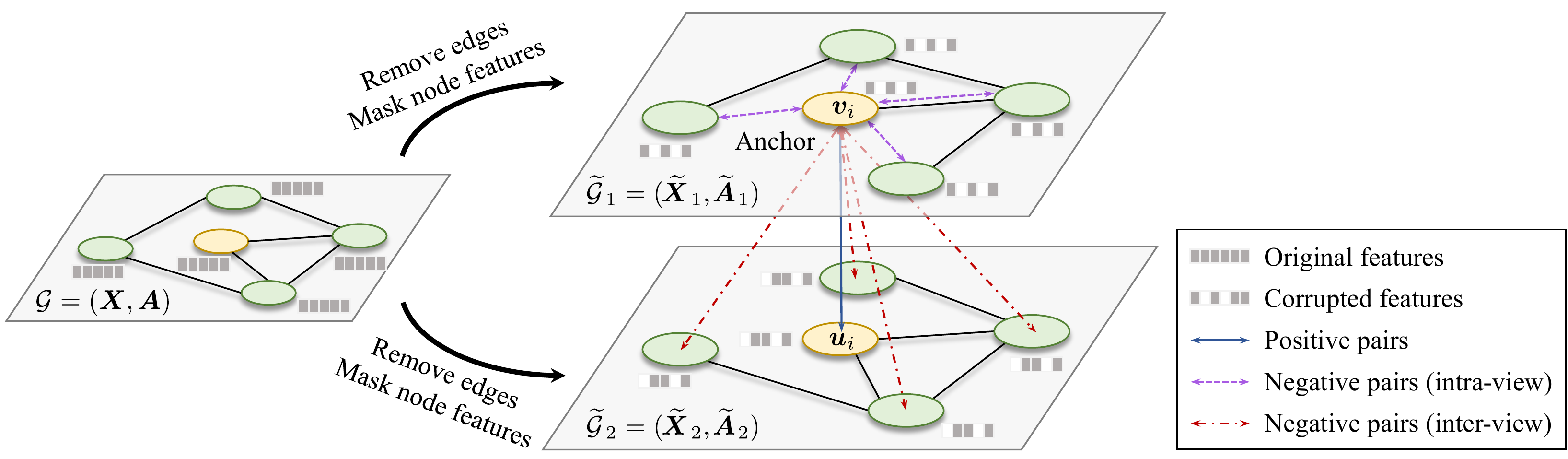}
	\caption{Our proposed deep GRAph Contrastive rEpresentation learning (GRACE) model.}
	\label{fig:model}
\end{figure}

Our contribution is summarized as follows.
Firstly, we propose a general contrastive framework for unsupervised graph representation learning. The proposed GRACE framework simplifies previous work and works by maximizing the agreement of node embeddings between two graph views.
Secondly, we propose two specific schemes, removing edges and masking features, to generate views of graphs.
Finally, we conduct comprehensive empirical studies using six popular public benchmark datasets on both transductive and inductive node classification under the commonly-used linear evaluation protocol. GRACE consistently outperforms existing methods
and our unsupervised method even surpasses its supervised counterparts on transductive tasks, demonstrating its great potential in real-world applications.

\section{Related Work}

\textbf{Contrastive learning of visual representations.\quad}
Being popular in self-supervised visual representation learning, contrastive methods aim to learn discriminative representations by contrasting positive and negative samples. For visual data, negative samples can be generated using image augmentation techniques such as cropping, rotation \cite{Gidaris:2018wr}, color distortion \cite{Larsson:2017vt}, etc.
Existing work \cite{Wu:2018kw,Tian:2019vw,He:2020tu} employs a memory bank for storing negative samples. Other work \cite{Bachman:2019wp,Ye:2019we,Chen:2020wj} explores in-batch negative samples. For an image patch as the anchor, these methods usually find a global summary vector \cite{Hjelm:2019uk,Bachman:2019wp} or patches in neighboring views \cite{vandenOord:2018ut,Henaff:2019ta} as the positive sample, and contrast them with negative-sampled counterparts, such as patches of other images within the same batch \cite{Hjelm:2019uk}.

Theoretical analysis sheds light on the reasons behind their success \cite{Poole:2019vk}. Objectives used in these methods can be seen as maximizing the lower bounds of MI between input features and their representations \cite{Linsker:1988ho}. However, recent work \cite{Tschannen:2020uo} reveals that downstream performance in evaluating the quality of representations may strongly depend on the bias that is encoded not only in the convolutional architectures but also in the specific estimator of the InfoMax objective.

\textbf{Graph representation learning.\quad}
Many traditional methods on unsupervised graph representation learning employ the contrastive paradigm as well \cite{Perozzi:2014ib,Grover:2016ex,Kipf:2016ul,Hamilton:2017wa}.
Prior work on unsupervised graph representation learning focuses on local contrastive patterns, which forces neighboring nodes to have similar embeddings. Positive samples under this circumstance are nodes appearing in the same random walk \cite{Perozzi:2014ib,Grover:2016ex}.
For example, the pioneering work DeepWalk \cite{Perozzi:2014ib} models probabilities of node co-occurrence pairs using noise-contrastive estimation \cite{Gutmann:2012eq}. These random-walk-based methods are proved to be equivalent to factorizing some forms of graph proximity (e.g., transformation of the adjacent matrix) \cite{Qiu:2018ez}, which overly emphasize on the structural information encoded in these graph proximities and also face severe scaling problem with large-scale datasets.
Also, these methods are known to be error-prone with inappropriate hyperparameter tuning \cite{Perozzi:2014ib,Grover:2016ex}.

Recent work on graph neural networks (GNN) employs more powerful graph convolutional encoders over conventional methods. Among them, considerable literature has grown up around the theme of supervised GNN \cite{Kipf:2016tc,Velickovic:2018we,Hu:2019vq,Wu:2019vz}, which requires labeled datasets that may not be accessible in real-world applications.
Along the other line of development, unsupervised GNNs receive little attention. Representative methods include GraphSAGE \cite{Hamilton:2017tp}, which incorporates DeepWalk-like objectives as well. Recent work DGI \cite{Velickovic:2019tu} marries the power of GNN and contrastive learning, which focuses on maximizing MI between global graph embeddings and local node embeddings. However, it is hard to fulfill the injective requirement of the graph readout function such that the graph embedding may be deteriorated. In contrast to DGI, our work does not rely on an explicit graph embedding. Instead, we focus on maximizing the agreement of node embeddings across two corrupted views of the graph.


\makeatletter
\def\mathcenterto#1#2{\mathclap{\phantom{#1}\mathclap{#2}}\phantom{#1}}
\let\old@widetilde\widetilde
\def\widetildeto#1#2{\mathcenterto{#2}{\old@widetilde{\mathcenterto{#1}{#2\,}}}}
\let\old@widehat\widehat
\def\widehatto#1#2{\mathcenterto{#2}{\old@widehat{\mathcenterto{#1}{#2\,}}}}
\makeatother
\def\widetilde{\widetildeto{X}}

\section{Deep Graph Contrastive Representation Learning}

In this section, we present our proposed GRACE framework in detail, starting with the overall framework of contrastive objectives, followed by specific graph view generation methods. At the end of this section, we provide theoretical justification behind our framework from two perspectives, i.e., connection to the InfoMax principle and the classical triplet loss.

\subsection{Preliminaries}
In unsupervised graph representation learning, let \(\mathcal{G} = (\mathcal{V}, \mathcal{E})\) denote a graph, where \(\mathcal{V} = \{ v_1, v_2, \cdots, v_N\}\), \(\mathcal{E} \subseteq \mathcal V \times \mathcal V\) represent the node set and the edge set respectively. We denote the feature matrix and the adjacency matrix as \(\bm{X} \in \mathbb{R}^{N \times F}\) and \(\bm{A} \in \{0,1\}^{N \times N}\), where \(\bm{x}_i \in \mathbb{R}^{F}\) is the feature of \(v_i\), and \(\bm{A}_{ij} = 1\) iff \((v_i, v_j) \in \mathcal{E}\).
There is no given class information of nodes in \(\mathcal{G}\) during training.
Our objective is to learn a GNN encoder \(f(\bm{X}, \bm{A}) \in \mathbb{R}^{N \times F^\prime}\) receiving the graph features and structure as input, that produces node embeddings in low dimensionality, i.e., \(F^\prime \ll F\). We denote \(\bm{H} = f(\bm{X}, \bm{A})\) as the learned representations of nodes, where \(\bm{h}_i\) is the embedding of node \(v_i\). These representations can be used in downstream tasks, such as node classification.


\subsection{Contrastive Learning of Node Representations}

\subsubsection{The Contrastive Learning Framework} 

Contrary to previous work that learns representations by utilizing local-global relationships, in GRACE, we learn embeddings by directly maximizing node-level agreement between embeddings.
To be specific, we first generate two graph views by randomly corrupting the original graph. Then, we employ a contrastive objective that enforces the encoded embeddings of each node in the two different views agree with each other and can be distinguished from embeddings of other nodes.

In our GRACE model, at each iteration, we generate two graph views, denoted as \(\widetilde{G}_1\) and \(\widetilde G_2\), and denote node embeddings in the two generated views as \(\bm{U} = f(\widetilde{\bm{X}}_1, \widetilde{\bm A}_1)\) and \(\bm{V} = f(\widetilde{\bm{X}}_2, \widetilde{\bm A}_2)\), where \(\widetilde{\bm{X}}_{*}\) and \(\widetilde{\bm{A}}_{*}\) are the feature matrices and adjacent matrices of the views.
Details on the generation of graph views will be discussed later in Section \ref{sec:generate-views}.

 
Then, we employ a contrastive objective (i.e., a discriminator) that distinguishes the embeddings of the same node in these two different views from other node embeddings. For any node \(v_i\), its embedding generated in one view, \(\bm{u}_i\), is treated as the anchor, the embedding of it generated in the other view, \(\bm{v}_i\), forms the positive sample, and embeddings of nodes other than \(v_i\) in the two views are naturally regarded as negative samples. Formally, we define the critic \(\theta(\bm u, \bm v) = s(g(\bm u), g(\bm v))\), where \(s\) is the cosine similarity and \(g\) is a non-linear projection to enhance the expression power of the critic \cite{Chen:2020wj,Tschannen:2020uo}. The projection \(g\) is implemented with a two-layer multilayer perceptron (MLP). We define the pairwise objective for each positive pair \((\bm{u}_i, \bm{v}_i)\) as
\begin{equation}
	\ell(\bm{u}_i, \bm{v}_i) = \log \frac {e^{\theta\left(\bm{u}_i, \bm{v}_{i} \right) / \tau}} {\underbrace{e^{\theta\left(\bm{u}_i, \bm{v}_{i} \right) / \tau}}_{\text{the positive pair}} + \underbrace{\sum_{k=1}^{N} \mathds{1}_{[k \neq i]} e^{\theta\left(\bm{u}_i, \bm{v}_{k} \right) / \tau}}_{\text{inter-view negative pairs}} + \underbrace{\sum_{k=1}^{N} \mathds{1}_{[k \neq i]} e^{\theta\left(\bm{u}_i, \bm{u}_k \right) / \tau}}_{\text{intra-view negative pairs}}},
\end{equation}
where \(\mathds{1}_{[k \neq i]} \in \{ 0, 1 \}\) is an indication function that equals to \(1\) iff \(k \neq i\), and \(\tau\) is a temperature parameter. Please note that, in our work, we do not sample negative nodes \emph{explicitly}. Instead, given a positive pair, we naturally define negative samples as all other nodes in the two views. Therefore, negative samples come from two sources, inter-view or intra-view nodes, corresponding to the second and the third term in the denominator, respectively. Since two views are symmetric, the loss for another view is defined similarly for \(\ell(\bm{v}_i, \bm u_i)\). 
The overall objective to be maximized is then defined as the average over all positive pairs, formally given by
\begin{equation}
	\mathcal{J} = \frac{1}{2N} \sum_{i = 1}^{N} \left[\ell(\bm{u}_i, \bm{v}_i) + \ell(\bm{v}_i, \bm{u}_i)\right].
	\label{eq:overall-loss}
\end{equation}

To sum up, at each training epoch, GRACE first generates two graph views \(\widetilde{\mathcal G}_1\) and \(\widetilde{\mathcal G}_2\) of graph \(\mathcal G\). Then, we obtain node representations \(\bm{U}\) and \(\bm{V}\) of \(\widetilde{\mathcal{G}}_1\) and \(\widetilde{\mathcal{G}}_2\) using a GNN encoder \(f\). Finally, the parameters of \(f\) and \(g\) is updated by maximizing the objective in Eq. (\ref{eq:overall-loss}). The learning algorithm is summarized in Algorithm \ref{algo:GRACE-training}.

\begin{algorithm}[ht]
	\DontPrintSemicolon\SetNoFillComment
	\caption{GRACE training algorithm}
	\label{algo:GRACE-training}
	\For {\(epoch \gets 1, 2, \cdots\)} {
		Generate two graph views \(\widetilde{\mathcal{G}}_1\) and \(\widetilde{\mathcal{G}}_2\) by performing corruption on \(\mathcal{G}\)\;
		Obtain node embeddings \(\bm{U}\) of \(\widetilde{\mathcal{G}}_1\) using the encoder \(f\)\;
		Obtain node embeddings \(\bm{V}\) of \(\widetilde{\mathcal{G}}_2\) using the encoder \(f\)\;
		Compute the contrastive objective \(\mathcal{J}\) with Eq. (\ref{eq:overall-loss})\;
		Update parameters by applying stochastic gradient ascent to maximize \(\mathcal{J}\)\;
	}
\end{algorithm}

\subsubsection{Graph View Generation}
\label{sec:generate-views}
Generating views is a key component of contrastive learning methods. In the graph domain, different views of a graph provide different contexts for each node. Considering contrastive approaches that rely on contrasting between node embeddings in different views, we propose to corrupt the original graph at both structure and attribute levels, which constructs diverse node contexts for the model to contrast with.
In GRACE, we design two methods for graph corruption, removing edges for topology and masking features for node attributes.
How to perform graph corruption is still an open problem \cite{Velickovic:2019tu}. It is flexible to adopt other alternative mechanisms of corruption methods in our framework.



\textbf{Removing edges (RE). \quad}
We randomly remove a portion of edges in the original graph. Formally, since we only remove existing edges, we first sample a random masking matrix \(\widetilde{\bm{R}} \in \{ 0, 1 \}^{N \times N}\), whose entry is drawn from a Bernoulli distribution \(\widetilde{\bm{R}}_{ij} \sim \mathcal{B}(1 - p_r)\) if \(\bm{A}_{ij} = 1\) for the original graph and \(\widetilde{\bm{R}}_{ij} = 0\) otherwise. Here \(p_r\) is the probability of each edge being removed. The resulting adjacency matrix can be computed as
\begin{equation}
	\widetilde{\bm A} = \bm{A} \circ \widetilde{\bm{R}},
\end{equation}
where \((\bm{x} \circ \bm{y})_i = x_i y_i\) is Hadamard product.

\textbf{Masking node features (MF). \quad}
Apart from removing edges, we randomly mask a fraction of dimensions with zeros in node features. Formally, we first sample a random vector \(\widetilde{\bm{m}} \in \{ 0, 1 \}^F\) where each dimension of it independently is drawn from a Bernoulli distribution with probability \(1 - p_m\), i.e., \(\widetilde{m}_i \sim \mathcal{B}(1 - p_m), \forall i\). Then, the generated node features \(\widetilde {\bm X}\) is computed by
\begin{equation}
	\widetilde {\bm{X}} = [ \bm x_1 \circ \widetilde{\bm{m}}; \bm x_2 \circ \widetilde{\bm{m}}; \cdots; \bm{x}_N \circ \widetilde{\bm{m}} ]^\top.
\end{equation}
Here \([\cdot ; \cdot ]\) is the concatenation operator.

Please kindly note that although our proposed RE and MF schemes are technically similar to Dropout \cite{Srivastava:2014cg} and DropEdge \cite{Rong:2020vx}, our GRACE model and the two referred methods are proposed for fundamentally different purposes. Dropout is a general technique that randomly masks neurons during training to prevent over-fitting of large-scale models. In the graph domain, DropEdge is proposed to prevent over-fitting and alleviate over-smoothing \emph{when the GNN architecture is too deep}. However, our GRACE framework randomly applies RE and MF to produce different graph views for contrastive learning at both graph topology and node feature levels. Moreover, the employed GNN encoder in GRACE is a rather shallow model, usually consisting of only two or three layers.


In our implementation, we jointly leverage these two methods to generate graph views.
The generation of \(\widetilde{\mathcal G}_1\) and \(\widetilde{\mathcal G}_2\) are controlled by two hyperparameters \(p_r\) and \(p_m\). 
To provide different contexts in the two views, the generation process of the two views uses two different sets of hyperparameters \(p_{r,1}, p_{m,1}\) and \(p_{r,2}, p_{m,2}\).
Experiments demonstrate that our model is not sensitive to the choice of \(p_r\) and \(p_m\) under mild conditions such that the original graph is not overly corrupted, e.g., \(p_r \le 0.8\) and \(p_m \le 0.8\). We refer readers to the sensitivity analysis presented in Appendix \ref{appendix:sensitivity} for empirical results.

\subsection{Theoretical Justification}

In this section, we provide theoretical justification behind our model from two perspectives, i.e., the mutual information maximization and the triplet loss. Detailed proofs can be found in Appendix \ref{appendix:proofs}.

\textbf{Connections to the mutual information.\quad}
Firstly, we reveal the connection between our loss and mutual information maximization between node features and the embeddings in the two views, which has been widely applied in the representation learning literature \cite{Tian:2019vw,Bachman:2019wp,Poole:2019vk,Tschannen:2020uo}. MI quantifies the amount of information obtained about one random variable by observing the other random variable.


\begin{theorem}
\label{thm:objective-InfoMax}
Let \(\mathbf{X}_i = \{ \bm{x}_k \}_{k \in \mathcal{N}(i)}\) be the neighborhood of node \(v_i\) that collectively maps to its output embedding, where \(\mathcal{N}(i)\) denotes the set of neighbors of node \(v_i\) specified by GNN architectures, and \(\mathbf{X}\) be the corresponding random variable with a uniform distribution \(p(\mathbf{X}_i) = \frac{1}{N}\). Given two random variables \(\mathbf{U, V} \in \mathbb{R}^{F^\prime}\) being the embedding in the two views, with their joint distribution denoted as \(p(\mathbf{U}, \mathbf{V})\), our objective \(\mathcal{J}\) is a lower bound of MI between encoder input \(\mathbf{X}\) and node representations in two graph views \(\mathbf{U, V}\). Formally,
\begin{equation}
	\mathcal{J} \leq I(\mathbf{X}; \mathbf{U}, \mathbf{V}).
\end{equation}
\end{theorem}
\begin{proof}[Proof sketch]
We first observe that our objective \(\mathcal{J}\) is a lower bound of the InfoNCE objective \cite{vandenOord:2018ut}, which is defined as \(I_\text{NCE}(\mathbf U; \mathbf V) \triangleq \mathbb{E}_{\prod_i {p(\bm u_i, \bm v_i)}} \left[ \frac{1}{N} \sum_{i=1}^N \log \frac{e^{\theta(\bm{u}_i, \bm{v}_i)}}{\frac{1}{N}\sum_{j = 1}^{N} e^{\theta(\bm{u}_i, \bm{v}_j)}} \right] \) \cite{Poole:2019vk}. According to \cite{vandenOord:2018ut}, the InfoNCE estimator is a lower bound of the true MI. Therefore, the theorem directly follows from the application of data processing inequality, which states that \(I(\mathbf U; \mathbf V) \leq I(\mathbf X; \mathbf U, \mathbf V)\).
\end{proof}
\begin{remark}
From Theorem \ref{thm:objective-InfoMax}, it reveals that maximizing \(\mathcal{J}\) is equivalent to maximizing a lower bound of the mutual information \(I(\mathbf X; \mathbf U, \mathbf V)\) between input node features and learned node representations.
Counterintuitively, recent work further provides empirical evidence that optimizing a stricter bound of MI may not lead to better downstream performance on visual representation learning \cite{Tschannen:2020uo}, which highlights the importance of the encoder design. In Appendix \ref{appendix:InfoNCE-objective}, we also compare our objective with the InfoNCE loss, which is a stricter estimator of MI, to further demonstrate the superiority of the GRACE model.
\end{remark}

\textbf{Connections to the triplet loss.\quad}
Alternatively, we may view the optimization problem in Eq. (\ref{eq:overall-loss}) as a classical triplet loss, commonly used in deep metric learning.
\begin{theorem}
\label{thm:objective-triplet-loss}
When the projection function \(g\) is the identity function and we measure embedding similarity by simply taking inner product, i.e., \(s(\bm{u}, \bm{v}) = \bm{u}^\top \bm{v}\), and further assuming that positive pairs are far more aligned than negative pairs, minimizing the pairwise objective \(\ell(\bm u_i, \bm v_i)\) coincides with maximizing the triplet loss, as given in the sequel
\begin{equation}
	- \ell(\bm u_i, \bm v_i) \propto 4N \tau + \sum_{j=1}^N \mathds 1_{[j \neq i]} \left[ \left(\| {\bm u_i} - {\bm v_i} \|^2 - \| {\bm u_i} - {\bm v_j} \|^2\right) + \left(\| {\bm u_i} - {\bm v_i} \|^2 - \| {\bm u_i} - {\bm u_j} \|^2\right) \right].
\end{equation}
\end{theorem}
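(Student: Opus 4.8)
The plan is to start from the pairwise objective, rewrite it as a single log-sum-exp quantity, linearize that quantity using the stated alignment assumption, and finally convert the resulting inner-product differences into squared Euclidean distances through a polarization identity. First I would substitute \(\theta(\bm u, \bm v) = \bm u^\top \bm v\) and pull the numerator into the denominator, obtaining
\begin{equation}
  -\ell(\bm u_i, \bm v_i) = \log\left(1 + \sum_{k \neq i} e^{(\theta(\bm u_i, \bm v_k) - \theta(\bm u_i, \bm v_i))/\tau} + \sum_{k \neq i} e^{(\theta(\bm u_i, \bm u_k) - \theta(\bm u_i, \bm v_i))/\tau}\right),
\end{equation}
so that every summand is controlled by the gap between a negative-pair score and the positive-pair score \(\theta(\bm u_i, \bm v_i)\).

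Next, invoking the hypothesis that the positive pair is far more aligned than every negative pair, I would treat these score gaps as the regime in which the first-order expansions \(e^x \approx 1 + x\) and \(\log(1 + s) \approx s\) may be applied termwise. This collapses the expression to
\begin{equation}
  -\ell(\bm u_i, \bm v_i) \approx 2(N-1) + \frac{1}{\tau}\sum_{k \neq i}\left[\left(\theta(\bm u_i, \bm v_k) - \theta(\bm u_i, \bm v_i)\right) + \left(\theta(\bm u_i, \bm u_k) - \theta(\bm u_i, \bm v_i)\right)\right],
\end{equation}
separating a constant from a sum of inter-view and intra-view score differences.

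The final step is purely algebraic. Assuming the embeddings share a common norm (as normalization enforces), the polarization identity gives \(\theta(\bm u_i, \bm v_k) - \theta(\bm u_i, \bm v_i) = \tfrac{1}{2}\left(\| \bm u_i - \bm v_i \|^2 - \| \bm u_i - \bm v_k \|^2\right)\), and the analogous relation holds for the intra-view term. Substituting these in and multiplying through by \(2\tau\) converts the leading constant \(2(N-1)\) into \(4(N-1)\tau \approx 4N\tau\) and reproduces exactly the paired distance differences of the claimed right-hand side, establishing the proportionality with constant \(\tfrac{1}{2\tau} > 0\).

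The hard part will be justifying the linearization. The phrase \emph{far more aligned} naturally suggests large negative score gaps, for which \(e^x \to 0\) rather than \(e^x \approx 1 + x\), so the two Taylor steps are in tension with a literal reading of the assumption. I would therefore frame this as a first-order heuristic in which the alignment hypothesis is precisely what licenses the termwise expansions, making the conclusion an asymptotic proportionality (hence \(\propto\)) rather than an exact identity, and absorbing the harmless \(O(\tau)\) discrepancy between \(4(N-1)\tau\) and \(4N\tau\) into that proportionality. I would also explicitly flag the normalization assumption needed for the polarization step, since without a common embedding norm the inner-product-to-distance conversion leaves residual \(\| \bm v_i \|^2 - \| \bm v_k \|^2\) terms that do not cancel in general.
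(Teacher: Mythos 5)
Your proof follows essentially the same route as the paper's: rewrite \(-\ell(\bm u_i,\bm v_i)\) as a single \(\log(1+\cdot)\) of exponentiated score gaps, apply the two first-order expansions \(\log(1+s)\approx s\) and \(e^x\approx 1+x\), and convert the resulting inner-product differences to squared distances. The caveats you flag --- the need for a common embedding norm in the polarization step and the tension between the ``far more aligned'' hypothesis and the linearization \(e^x\approx 1+x\) --- are genuine and are passed over silently in the paper's own proof, so your version is if anything the more careful one.
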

\begin{remark}
Theorem \ref{thm:objective-triplet-loss} draws connection between the objective and the classical triplet loss.
In other words, we may regard the problem in Eq. (\ref{eq:overall-loss}) as learning graph convolutional encoders to encourage positive samples being further away from negative samples in the embedding space. Moreover, by viewing the objective from the metric learning perspective, we highlight the importance of appropriately choosing negative samples, which is often neglected in previous InfoMax-based methods. Last, the contrastive objective is cheap to optimize since we do not have to generate negative samples explicitly and all computation can be performed in parallel. In contrast, the triplet loss is known to be computationally expensive \cite{Schroff:2015wo}.
\end{remark}

\section{Experiments}

In this section, we empirically evaluate the quality of produced node embeddings on node classification using six public benchmark datasets.
We refer readers of interest to the supplementary material on details of experiments, including dataset configurations (Appendix \ref{appendix:dataset}), implementation and hyperparameters (Appendix \ref{appendix:implementation}), and additional experiments (Appendix \ref{appendix:experiments}).

\subsection{Datasets}

For comprehensive comparison, we use six widely-used datasets to study the performance of both transductive and inductive node classification. Specifically, we use three kinds of datasets:
(1) citation networks including Cora, Citeseer, Pubmed, and DBLP \cite{Sen:2008gm,Bojchevski:2018ua} for transductive node classification,
(2) social networks from Reddit posts for inductive learning on large-scale graphs \cite{Hamilton:2017tp},
and (3) biological protein-protein interaction (PPI) networks \cite{Zitnik:2017uz} for inductive node classification on multiple graphs.
Details of these datasets can be found in Appendix \ref{appendix:dataset}.

\subsection{Experimental Setup}

For every experiment, we follow the linear evaluation scheme as in \cite{Velickovic:2019tu}, where each model is firstly trained in an unsupervised manner. The resulting embeddings are used to train and test a simple \(\ell_2\)-regularized logistic regression classifier.
We train the model for twenty runs and report the averaged performance on each dataset. Moreover, we measure performance using micro-averaged F1-score on inductive tasks and accuracy on transductive tasks.
Please kindly note that for inductive learning tasks, tests are conducted on unseen or untrained nodes and graphs, while for transductive learning tasks, we use the features of all data, but the labels of the test set are masked during training.


\textbf{Transductive learning.\quad}
In transductive learning tasks, we employ a two-layer GCN \cite{Kipf:2016tc} as the encoder. Our encoder architecture is formally given by
\begin{align}
	\GC_i (\bm{X}, \bm{A}) & = \sigma \left( \hat{\bm{D}}^{-\frac{1}{2}} \hat {\bm{A}} \hat{\bm{D}}^{-\frac{1}{2}} \bm{X} \bm{W}_i \right), \\
	f(\bm X, \bm A) & = \GC_2 ( \GC_1 ( \bm{X}, \bm{A} ), \bm{A} ).
\end{align}
where \(\hat{\bm{A}} = \bm{A} + \bm{I}\) is the adjacency matrix with self-loops, \(\hat {\bm D} = \sum_i \hat{\bm{A}}_i\) is the degree matrix, \(\sigma(\cdot)\) is a nonlinear activation function, e.g., \(\operatorname{ReLU}(\cdot) = \max(0, \cdot)\), and \(\bm{W}_i\) is a trainable weight matrix.

We consider the following two categories of representative algorithms as baselines, including (1) traditional methods DeepWalk \cite{Perozzi:2014ib} and node2vec \cite{Grover:2016ex}, and (2) deep learning methods GAE, VGAE \cite{Kipf:2016ul}, and DGI \cite{Velickovic:2019tu}. Furthermore, we report performance obtained using a logistic regression classifier on raw node features and DeepWalk with embeddings concatenated with input node features. For direct comparison with supervised counterparts, we also report the performance of two representative models SGC \cite{Wu:2019vz} and GCN \cite{Kipf:2016tc}, where they are trained in an end-to-end fashion.

\textbf{Inductive learning on large graphs.\quad}
Considering the large scale of the Reddit data, we closely follow \cite{Velickovic:2019tu} and employ a three-layer GraphSAGE-GCN \cite{Hamilton:2017tp} with residual connections \cite{He:2016ib} as the encoder, which is formulated as
\begin{align}
	\MP_i(\bm{X}, \bm{A}) & = \sigma( [\hat{\bm{D}}^{-1}\hat{\bm{A}}\bm{X} ; \bm{X}] \bm{W}_i ),\label{eq:SAGE-mean-pooling}\\
	f(\bm{X}, \bm{A}) & = \MP_3 ( \MP_2 ( \MP_1 ( \bm{X}, \bm{A} ) , \bm{A} ), \bm{A} ).
\end{align}
Here we use the mean-pooling propagation rule, as \(\hat{\bm{D}}^{-1}\) averages over node features. Due to the large scale of Reddit, it cannot fit into GPU memory entirely. Therefore, we apply the subsampling method proposed in \cite{Hamilton:2017tp}, where we first randomly select a minibatch of nodes, then a subgraph centered around each selected node is obtained by sampling node neighbors with replacement. To be specific, we sample 30, 25, 20 neighbors at the first-, second-, and third-hop respectively. For generating graph views under such sampling-based settings, both RE and MF can be adapted to sampled subgraphs effortlessly.

\textbf{Inductive learning on multiple graphs.\quad}
For inductive learning on multiple graphs PPI, we stack three mean-pooling layer with skip connections, similar to DGI \cite{Velickovic:2019tu}. The graph convolutional encoder can be formulated as
\begin{align}
	\bm H_1 & = \MP_1(\bm X, \bm A), \\
	\bm H_2 & = \MP_2(\bm X \bm W_\text{skip} + \bm H_1, \bm A), \\
	f(\bm X, \bm A) = \bm H_3 &= \MP_3(\bm X \bm W_\text{skip}' + \bm H_1 + \bm H_2, \bm A),
\end{align}
where \(\bm{W}_\text{skip}\) and \(\bm{W}_\text{skip}'\) are two projection matrices, and \(\MP_i\) is defined in Eq. (\ref{eq:SAGE-mean-pooling}).
Despite that the PPI dataset consists of multiple graphs, we only compute negative samples for one anchor node as other nodes within the same graph, due to efficiency considerations.

Baselines in both large graphs and multiple graphs settings are selected similarly to transductive tasks. We consider (1) traditional methods DeepWalk\footnote{DeepWalk is not applicable to the multi-graph experiments, since the embedding spaces produced by DeepWalk may be arbitrarily rotated with respect to different disjoint graphs \cite{Hamilton:2017tp}.} \cite{Perozzi:2014ib}, and (2) deep learning methods GraphSAGE \cite{Hamilton:2017tp} and DGI \cite{Velickovic:2019tu}. Additionally, we report the performance of using raw features and DeepWalk + features under the same settings as in transductive tasks. We further provide the performance of two representative supervised methods, including FastGCN \cite{Chen:2018wp} and GaAN-mean \cite{Zhang:2018vn} for reference. In the table, results of baselines are reported in accordance with performance in their original papers. For GraphSAGE, we reuse the unsupervised results for fair comparison.


\subsection{Results and Analysis}

\begin{table}[t]
	\centering
	\caption{Summary of performance on node classification in terms of accuracy in percentage (on transductive tasks) or micro-averaged F1 score (on inductive tasks) with standard deviation. Available data for each method during the training phase is shown in the second column, where \(\bm{X}, \bm{A}, \bm{Y}\) correspond to node features, the adjacency matrix, and labels respectively. The highest performance of unsupervised models is highlighted in boldface.}
	\label{tab:node-classification}
	\vspace{4pt}
	\begin{tabular}{cccccc}
	\multicolumn{6}{c}{(a) \textit{Transductive}} \\
	\toprule
	Method & Training Data & Cora & Citeseer & Pubmed & DBLP \\
	\midrule
	Raw features & \(\bm{X}\) & 64.8 & 64.6 & 84.8 & 71.6 \\
	node2vec & \(\bm{A}\) & 74.8 & 52.3 & 80.3 & 78.8 \\
	DeepWalk & \(\bm{A}\) & 75.7 & 50.5 & 80.5 & 75.9 \\
	DeepWalk + features & \(\bm{X},\bm{A}\) & 73.1 & 47.6 & 83.7 & 78.1 \\
	\midrule
	GAE   & \(\bm{X},\bm{A}\) & 76.9 & 60.6 & 82.9 & 81.2 \\
	VGAE  & \(\bm{X},\bm{A}\) & 78.9 & 61.2 & 83.0 & 81.7 \\
	DGI   & \(\bm{X},\bm{A}\) & 82.6{\footnotesize \textpm0.4} & 68.8{\footnotesize \textpm0.7} & 86.0{\footnotesize \textpm0.1} & 83.2{\footnotesize \textpm0.1} \\
	\textbf{GRACE} & \(\bm{X},\bm{A}\) & \textbf{83.3{\footnotesize \textpm0.4}} & \textbf{72.1{\footnotesize \textpm0.5}} & \textbf{86.7{\footnotesize \textpm0.1}} & \textbf{84.2{\footnotesize \textpm0.1}} \\
	\specialrule{0.5pt}{0.5pt}{1pt}
	\midrule
    SGC & \(\bm{X},\bm{A},\bm{Y}\) & 80.6 & 69.1 & 84.8 & 81.7 \\
	GCN   & \(\bm{X},\bm{A},\bm{Y}\) & 82.8 & 72.0 & 84.9 & 82.7 \\
	\bottomrule
	\end{tabular}
	\\
	\begin{tabular}{cccc}
	\multicolumn{4}{c}{} \\
	\multicolumn{4}{c}{(b) \textit{Inductive}} \\
	\toprule
	Method & Training Data & Reddit & PPI \\
	\midrule
	Raw features & \(\bm{X}\) & 58.5 & 42.2 \\
	DeepWalk & \(\bm{A}\) & 32.4 & --- \\
	DeepWalk + features & \(\bm{X}, \bm{A}\) & 69.1 & --- \\
	\midrule
	GraphSAGE-GCN  & \(\bm{X}, \bm{A}\) & 90.8 & 46.5 \\
	GraphSAGE-mean & \(\bm{X}, \bm{A}\) & 89.7 & 48.6 \\
	GraphSAGE-LSTM & \(\bm{X}, \bm{A}\) & 90.7 & 48.2 \\
	GraphSAGE-pool & \(\bm{X}, \bm{A}\) & 89.2 & 50.2 \\
	DGI   & \(\bm{X}, \bm{A}\) & 94.0{\footnotesize \textpm0.1} & 63.8{\footnotesize \textpm0.2} \\
	\textbf{GRACE} & \(\bm{X}, \bm{A}\) & \textbf{94.2{\footnotesize \textpm0.0}} & \textbf{66.2{\footnotesize \textpm0.1}} \\
	\specialrule{0.5pt}{0.5pt}{1pt}
	\midrule
	FastGCN & \(\bm{X},\bm{A},\bm{Y}\) & 93.7 & --- \\
	GaAN-mean  & \(\bm{X},\bm{A},\bm{Y}\) & 95.8{\footnotesize \textpm0.1} &  96.9{\footnotesize \textpm0.2} \\
	\bottomrule
	\end{tabular}
\end{table}

The empirical performance is summarized in Table \ref{tab:node-classification}. Overall, from the table, we can see that our proposed model shows strong performance across all six datasets.
GRACE consistently performs better than unsupervised baselines by considerable margins on both transductive and inductive tasks. The strong performance verifies the superiority of the proposed contrastive learning framework.
We particularly note that GRACE is competitive with models \emph{trained with label supervision} on all four transductive datasets and inductive dataset Reddit.

We make other observations as follows.
Firstly, GRACE achieves considerable improvement over another competitive contrastive learning method DGI on PPI. We believe that this is due to the extreme sparsity of node features (over 40\% nodes having all-zero features \cite{Hamilton:2017tp}), which emphasizes the importance of considering topological information when choosing negative samples. For datasets like PPI, extreme feature sparsity prevents DGI from discriminating samples in the original graph from the corrupted graph, generated via shuffling node features, since shuffling node features makes no effect for the contrastive objective.
Contrarily, the RE scheme used in GRACE does not rely on node features and acts as a remedy under such circumstances, which can explain the gain of GRACE on PPI compared with DGI.
Also, we note that there is still a huge gap between our method with supervised models. These supervised models benefit another merit from labels, which provide other auxiliary information for model learning.
Considering the sparse nature of real-world datasets, we perform another experiment to verify that our method is robust to sparse node features (Appendix \ref{appendix:robustness}). Results show that by randomly removing node features, our still outperforms existing baselines.

Secondly, the performance of traditional contrastive learning methods like DeepWalk is inferior to the naive classifier that only uses raw features on some datasets (Citeseer, Pubmed, and Reddit), which suggests that these methods may be ineffective in utilizing node features.
Unlike traditional work, we see that GCN-based methods, e.g., GraphSAGE and GAE, are capable of incorporating node features when learning embeddings. However, we note that on certain datasets (Pubmed), their performance is still worse than DeepWalk + feature, which we believe can be attributed to their naive method of selecting negative samples that simply chooses contrastive pairs based on edges.
This fact further demonstrates the important role of selecting negative samples in contrastive representation learning. The superior performance of GRACE compared to GAEs also once again verifies the effectiveness of our proposed GRACE framework that contrasts nodes across graph views.


Additionally, we perform sensitivity analysis on critical hyperparameters \(p_r\) and \(p_m\) (Appendix \ref{appendix:sensitivity}) as well as ablation studies on our hybrid scheme on generating graph views (Appendix \ref{appendix:ablation}). Results show that our method is stable to perturbation of these parameters and verify the necessity of corruption at both graph topology and node feature levels. We also compare the classical InfoNCE loss (Appendix \ref{appendix:InfoNCE-objective}), verifying the efficacy of our design choice. Details of these extra experiments can be found in the supplementary material.

\section{Conclusion}

In this paper, we have developed a novel graph contrastive representation learning framework based on maximizing the agreement at the node level. Our model learns representations by first generating graph views using two proposed schemes, removing edges and masking node features, and then applying a contrastive loss to maximize the agreement of node embeddings in these two views. Theoretical analysis reveals the connections from our contrastive objective to mutual information maximization and the classical triplet loss, which justifies our motivation. We have conducted comprehensive experiments using various real-world datasets under transductive and inductive settings. Experimental results demonstrate that our proposed method can consistently outperform existing state-of-the-art methods by large margins and even surpass supervised counterparts on transductive tasks.

\section*{Discussions on Broader Impact}

Our proposed self-supervised graph representation learning techniques help alleviate the label scarcity issue when deploying machine learning applications in real-world, which saves a lot of efforts on human annotating.
For example, our GRACE framework can be plugged into existing recommender systems and produces high-quality embeddings for users and commodities to resolve the cold start problem.
Moreover, from the empirical results, our work outperforms existing baselines on protein function prediction by significant margins, which demonstrate its great potential in drug discovery and treatment, given the COVID-19 crisis at this critical juncture.
Note that our work mainly serves as a plug-in for existing machine learning models, it does not bring new ethical concerns.
However, the GRACE model may still give biased outputs (e.g., gender bias, ethnicity bias), as the provided data itself may be strongly biased during the processes of the data collection, graph construction, etc.

\section*{Acknowledgements}

The authors would like to thank Tao Sun and Sirui Lu for insightful discussions.
This work is jointly supported by National Key Research and Development Program (2018YFB1402600, 2016YFB1001000) and National Natural Science Foundation of China (U19B2038, 61772528).

\clearpage
\appendix
\section{Dataset Details}
\label{appendix:dataset}

\paragraph{Transductive learning.}
We utilize four widely-used citation networks, Cora, Citeseer, Pubmed, and DBLP, for predicting article subject categories.
In these datasets, graphs are constructed from computer science article citation links. Specifically, nodes correspond to articles and undirected edges to citation links between articles. Furthermore, each node has a sparse bag-of-words feature and a corresponding label of article types.
The former three networks are provided by \cite{Sen:2008gm,Yang:2016ts} and the latter DBLP dataset is provided by \cite{Bojchevski:2018ua}.
On these citation networks, we randomly select 10\% of the nodes as the training set, 10\% nodes as the validation set, and leave the rest nodes as the test set.

\paragraph{Inductive learning on large graphs.}
We then predict community structures of a large-scale social network, collected from Reddit. The dataset, preprocessed by \cite{Hamilton:2017tp}, contains Reddit posts created in September 2014, where posts belong to different communities (subreddit). In the dataset, nodes correspond to posts, and edges connect posts if the same user has commented on both.
Node features are constructed from post title, content, and comments, using off-the-shelf GloVe word embeddings \cite{Pennington:2014kh}, along with other metrics such as post score and the number of comments. 
Following the inductive setting of \cite{Hamilton:2017tp,Velickovic:2019tu}, on the Reddit dataset, we choose posts in the first 20 days for training, including 151,708 nodes, and the remaining for testing (with 30\% data including 23,699 nodes for validation).

\paragraph{Inductive learning on multiple graphs.}
Last, we predict protein roles, in terms of their cellular functions from gene ontology, within the protein-protein interaction (PPI) networks \cite{Zitnik:2017uz} to evaluate the generalization ability of the proposed method across multiple graphs.
The PPI dataset contains multiple graphs, with each corresponding to a human tissue.
The graphs are constructed by \cite{Hamilton:2017tp}, where each node has multiple labels that is a subset of gene ontology sets (121 in total), and node features include positional gene sets, motif gene sets, and immunological signatures (50 in total).
Following \cite{Hamilton:2017tp}, we select twenty graphs consisting of 44,906 nodes as the training set, two graphs containing 6,514 nodes as the validation, and the rest four graphs containing 12,038 nodes as the test set.

The statistics of datasets are summarized in Table \ref{tab:dataset-statistics}; download links are included in Table \ref{tab:dataset-links}.
For transductive tasks, similar to \cite{Kipf:2016tc}, during the training phase, all node features are visible but node labels are masked. In the inductive setting, we closely follow \cite{Hamilton:2017tp}; during training, nodes for evaluation are completely invisible; evaluation is then conducted on unseen or untrained nodes and graphs.

\begin{table}[h]
	\centering
	\caption{Statistics of datasets used in experiments.}
	\label{tab:dataset-statistics}
	\begin{tabular}{cccccc}
	\toprule
	Dataset & Type & \#Nodes & \#Edges & \#Features & \#Classes \\
	\midrule
	Cora & Transductive & 2,708 & 5,429 & 1,433 & 7 \\
	Citeseer & Transductive & 3,327 & 4,732 & 3,703 & 6 \\
	Pubmed & Transductive & 19,717 & 44,338 & 500 & 3 \\
	DBLP & Transductive & 17,716 & 105,734 & 1,639 & 4 \\
	\midrule
	Reddit & Inductive & 231,443 & 11,606,919 & 602 & 41 \\
	PPI & Inductive & \makecell{56,944 \\(24 graphs)} & 818,716 & 50 & \makecell{121 \\(multilabel)} \\
	\bottomrule
	\end{tabular}
\end{table}

\begin{table}[h]
	\small
	\centering
	\caption{Dataset download links.}
	\begin{tabular}{cl}
		\toprule
		Dataset & Download link \\
		\midrule
		Cora & \texttt{\url{https://github.com/kimiyoung/planetoid/raw/master/data}} \\
		Citeseer & \texttt{\url{https://github.com/kimiyoung/planetoid/raw/master/data}} \\
		Pubmed & \texttt{\url{https://github.com/kimiyoung/planetoid/raw/master/data}} \\
		DBLP & \texttt{\url{https://github.com/abojchevski/graph2gauss/raw/master/data/dblp.npz}} \\
		\midrule
		Reddit & \texttt{\url{https://s3.us-east-2.amazonaws.com/dgl.ai/dataset/reddit.zip}} \\
		PPI & \texttt{\url{https://s3.us-east-2.amazonaws.com/dgl.ai/dataset/ppi.zip}} \\
		\bottomrule
	\end{tabular}
	\label{tab:dataset-links}
\end{table}

\section{Implementation}
\label{appendix:implementation}

\paragraph{Computing infrastructures.}
All models are implemented using PyTorch Geometric 1.5.0 \cite{Fey:2019wv} and PyTorch 1.4.0 \cite{Paszke:2019vf}. All datasets used throughout experiments are available in PyTorch Geometric libraries. For node classification, we use the existing implementation of logistic regression with \(\ell_2\) regularization from Scikit-learn \cite{Pedregosa:2011cx}. All experiments are conducted on a computer server with eight NVIDIA Titan Xp GPUs (12GB memory each) and fourteen Intel Xeon E5-2660 v4 CPUs.

\paragraph{Hyperparameters.}
All models are initialized with Glorot initialization \cite{Glorot:2010uc}, and trained using Adam SGD optimizer \cite{Kingma:2015us} on all datasets.
The initial learning rate is set to 0.001 with an exception to 0.0005 on Cora and \(10^{-5}\) on Reddit.
The \(\ell_2\) weight decay factor is set to \(10^{-5}\) on all datasets.
On both transductive and inductive tasks, we train the model for a fixed number of epochs, specifically 200, 200, 1500, 1000 epochs for Cora, Citeseer, Pubmed and DBLP, respectively, 40 for Reddit and 200 for PPI.
The probability parameters controlling the sampling process, \(p_{r,1}, p_{m,1}\) for the first view and \(p_{r,2}, p_{m,2}\) for the second view, are all selected between 0.0 and 0.4, since the original graph will be overly corrupted when the probability is set too large. Note that to generate different contexts for nodes in the two views, \(p_{r,1}\) and \(p_{r,2}\) should be distinct, and the same holds for \(p_{m,1}\) and \(p_{m,2}\).
All dataset-specific hyperparameters are summarized in Table \ref{tab:hyperparameters}.

\begin{table}[h]
	\small
	\centering
	\caption{Hypeparameter specifications.}
    \begin{tabular}{cccccccccc}
	\toprule
	Dataset & \(p_{m,1}\) & \(p_{m,2}\) & \(p_{r,1}\) & \(p_{r,2}\) & \makecell{Learning\\rate} & \makecell{Weight\\decay} & \makecell{Training\\epochs} & \makecell{Hidden\\dimension} & \makecell{Activation\\function} \\
	\midrule
	Cora  & 0.3   & 0.4   & 0.2   & 0.4   & 0.005 & \(10^{-5}\) & 200   & 128   & ReLU \\
	Citeseer & 0.3   & 0.2   & 0.2   & 0.0   & 0.001 & \(10^{-5}\) & 200   & 256   & PReLU \\
	Pubmed & 0.0   & 0.2   & 0.4   & 0.1   & 0.001 & \(10^{-5}\) & 1,500  & 256   & ReLU \\
	DBLP  & 0.1   & 0.0   & 0.1   & 0.4   & 0.001 & \(10^{-5}\) & 1,000  & 256   & ReLU \\
	\midrule
	Reddit & 0.3   & 0.2   & 0.1   & 0.2   & 0.00001 & \(10^{-5}\) & 40    & 512   & ELU \\
	PPI   & 0.1   & 0.0   & 0.3   & 0.4   & 0.001 & \(10^{-5}\) & 500   & 512   & RReLU \\
	\bottomrule
	\end{tabular}
	\label{tab:hyperparameters}
\end{table}

%

\section{Additional Experiments}
\label{appendix:experiments}

\subsection{Sensitivity Analysis}
\label{appendix:sensitivity}

\begin{figure}[b]
	\centering
	\includegraphics[width=0.6\linewidth]{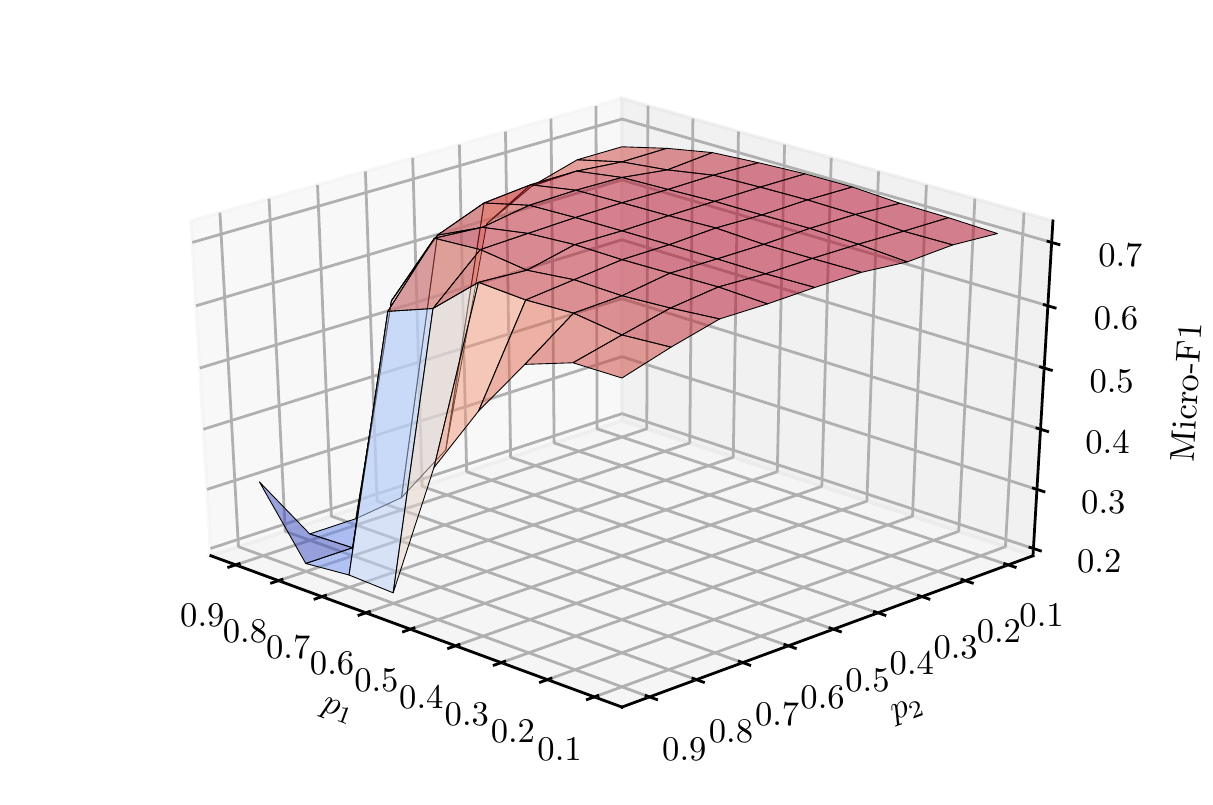}
	\caption{The performance of GRACE with varying different hyperparameters in transductive node classification on the Citeseer dataset in terms of Micro-F1.}
	\label{fig:sensitivity-analysis}
\end{figure}

In this section, we perform sensitivity analysis on critical hyperparameters in GRACE, namely four probabilities \(p_{m,1},p_{r,1},p_{m,2},p_{r,2}\) that determine the generation of graph views to show the model stability under the perturbation of these hyperparameters.
We conduct trasductive node classification by varying these parameters from 0.1 to 0.9. For sake of visualization brevity, we set \(p_1 = p_{r,1} = p_{m,1}\) and \(p_2 = p_{r,2} = p_{m,2}\). In other words, \(p_1\) and \(p_2\) control the generation of the two graph views. Note that we only change these four parameters in the sensitivity analysis, other parameters remain the same as previously described.

The results on the Citeseer dataset is shown are Figure \ref{fig:sensitivity-analysis}.
From the figure, it can be observed that the performance of node classification in terms of Micro-F1 is relatively stable when the parameters are not too large, as shown in the plateau in the figure. We thus conclude that overall, our model is insensitive to these probabilities, demonstrating the robustness to hyperparameter tuning.
If the probability is set too large (e.g., \(> 0.5\)), the original graph will be heavily undermined. For example, when \(p_r = 0.9\), almost every existing edge has been removed, leading to isolated nodes in the generated graph views. Then, under such circumstance, the graph convolutional network is hard to learn useful information from node neighborhoods. Therefore, the learnt node embeddings in the two graph views are not distinctive enough, which will result in difficulty of optimizing the contrastive objective.

\subsection{Ablation Studies}
\label{appendix:ablation}



In this section, we perform ablation studies on the two schemes for generating graph views, removing edge (RE) and masking node features (MF), to verify the effectiveness of the proposed hybrid scheme.
We denote GRACE (--RE) as the model without removing edges and GRACE (--MF) as the model without masking node features. We report the performance of GRACE (--RE), GRACE (--MF) and the original model GRACE on transductive node classification under the identical settings as previous, except for different enabled schemes. The results are presented in Table \ref{tab:ablation}.

It is seen that our hybrid approach that jointly applies RE and MF significantly outperform two downgraded models that only use one standalone method RE or MF. These results verify the effectiveness of our proposed scheme for graph corruption, and further show the necessity of jointly considering corruption at both graph topology and node feature levels.

\begin{table}[h]
\centering
\caption{The performance of model variants along with the original GRACE model in the ablation study in terms of accuracy of node classification. GRACE (--RE) and GRACE (--MF) denote the model without removing edges and masking node features respectively.}
\begin{tabular}{ccccc}
\toprule
Method & Cora & Citeseer & Pubmed & DBLP \\
\midrule
GRACE & \textbf{83.2{\footnotesize \textpm 0.5}} & \textbf{72.1{\footnotesize \textpm 0.5}} & \textbf{86.7{\footnotesize \textpm 0.1}} & \textbf{84.2{\footnotesize \textpm 0.1}} \\
GRACE (--RE) & 82.3{\footnotesize \textpm 0.4} & 72.0{\footnotesize \textpm 0.4} & 84.8{\footnotesize \textpm 0.2} & 83.6{\footnotesize \textpm 0.2} \\
GRACE (--MF) & 81.6{\footnotesize \textpm 0.4} & 69.9{\footnotesize \textpm 0.6} & 85.7{\footnotesize \textpm 0.1} & 83.5{\footnotesize \textpm 0.1} \\
\bottomrule
\end{tabular}
\label{tab:ablation}
\end{table}

\subsection{Comparison with InfoNCE Loss}
\label{appendix:InfoNCE-objective}

In this section, we consider another widely-used objective, the InfoNCE loss \cite{vandenOord:2018ut} , in contrastive methods. For fair comparison, we measure the node similarities between two graph views using the InfoNCE objective, which is defined as
\begin{equation}
	\mathcal{J}_{\text{NCE}} = \frac{1}{2} \left[ \ell_{\text{NCE}}(\bm V, \bm U) + \ell_{\text{NCE}}(\bm V, \bm U) \right],
\end{equation}
where the pairwise objective is defined by $\ell_{\text{NCE}}(\bm U, \bm V) \triangleq \frac{1}{N} \sum_{i=1}^N \log \frac{e^{\theta(\bm{u}_i, \bm{v}_i)}}{\frac{1}{N}\sum_{j = 1}^{N} e^{\theta(\bm{u}_i, \bm{v}_j)}}$. $\ell_{\text{NCE}}(\bm V, \bm U)$ can be defined symmetrically. The modified model is denoted as GRACE--NCE hereafter.
We report the performance of GRACE--NCE on transductive node classification under identical settings as with the original model GRACE. The results are summarized in Table \ref{tab:comparison-with-infonce}.

From the table, we can clearly see that the performance of the variant model GRACE-NCE is inferior to that of the original model GRACE on all four datasets. The results empirically demonstrate that, although InfoNCE is a stricter estimator of the mutual information, our objective is more effective and shows better downstream performance, which is consistent with previous observations in visual representation learning \cite{Tschannen:2020uo}. 
We believe that the superior performance of our objective compared to InfoNCE can be attributed to the inclusion of more negative samples. Specifically, we take intra-view negative pairs into consideration in our objective, which can be viewed as a regularization against the smoothing problem brought by graph convolution operators.

\begin{table}[h]
	\centering
	\caption{The performance of GRACE and GRACE--NCE in transductive node classification on four citation datasets.}
	\label{tab:comparison-with-infonce}	
	\begin{tabular}{ccccc}
	\toprule
	Method & Cora & Citeseer & Pubmed & DBLP \\
	\midrule
	GRACE & \textbf{83.2{\footnotesize \textpm0.5}} & \textbf{72.1{\footnotesize \textpm0.5}} & \textbf{86.7{\footnotesize \textpm0.1}} & \textbf{84.2{\footnotesize \textpm0.1}} \\
	GRACE--NCE & 82.1{\footnotesize \textpm0.4} & 70.9{\footnotesize \textpm0.6} & 85.0{\footnotesize \textpm0.1} & 82.1{\footnotesize \textpm0.1} \\
	\bottomrule
	\end{tabular}
\end{table} 

\subsection{Robustness to Sparse Features}
\label{appendix:robustness}

As discussed before, for existing work DGI, it is relatively easy to generate negative samples for nodes having dense features using the feature shuffling scheme. However, when node features are sparse, feature shuffling may not be sufficient to generate different neighborhoods for nodes, which motivates our hybrid scheme that corrupts the original graph at both topology and attribute levels. 

In this section, we conduct experiments with randomly contaminating the training data by masking a certain portion of the node features to zeros. Specifically, we vary the contamination rate of node features from 0.5 to 0.9 on four citation networks. We conduct experiments on transductive node classification with all other parameters being the same as previously described. The performance in terms of accuracy is plotted in Figure \ref{fig:robustness}.

From the figures, we can see that GRACE consistently outperforms DGI with large margins under different contamination rates, demonstrating the robustness of our proposed GRACE model to sparse features.
We attribute the robustness of GRACE to the superiority of our proposed RE method for graph corruption at topology level, since RE is capable of constructing different topology context for nodes without dependence on node features.
These results once again verify the necessity of considering graph corruption at both topology and attribute levels.
Note that, when a large portion of node features are masked, e.g., 90\% features are masked, both GRACE and DGI perform poorly. This may be explained from the fact that, when the node features are overly contaminated, nodes are highly sparse such that the GNN model is ineffective to extract useful information from nodes, leading to performance deterioration. 

\begin{figure}[h]
	\centering
	\subfloat[Cora]{
		\includegraphics[width=0.242\linewidth]{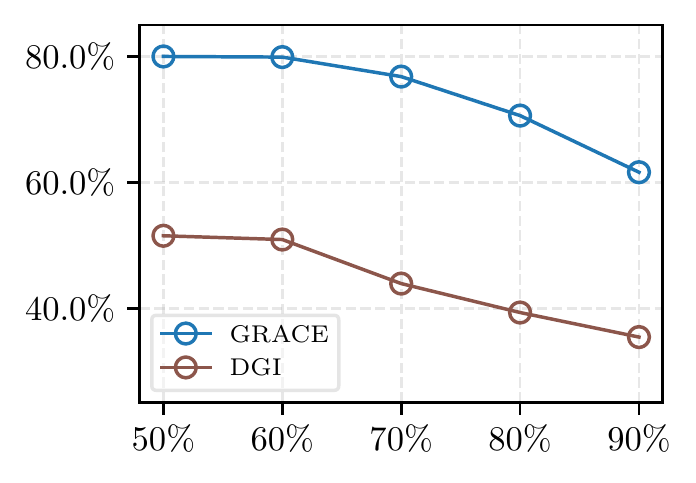}
		\label{fig:robustness-cora}
	}
	\subfloat[Citeseer]{
		\includegraphics[width=0.242\linewidth]{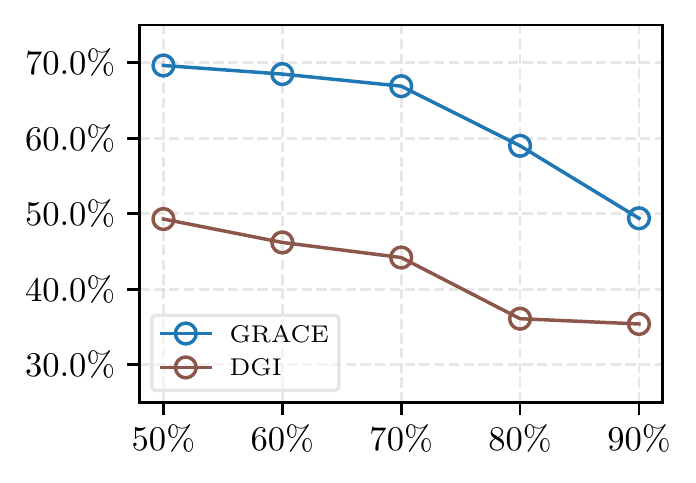}
		\label{fig:robustness-citeseer}
	}
	\subfloat[Pubmed]{
		\includegraphics[width=0.242\linewidth]{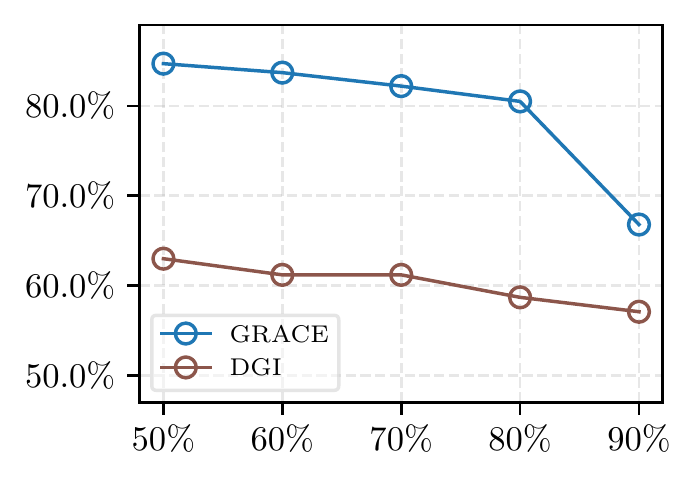}
		\label{fig:robustness-pubmed}
	}
	\subfloat[DBLP]{
		\includegraphics[width=0.242\linewidth]{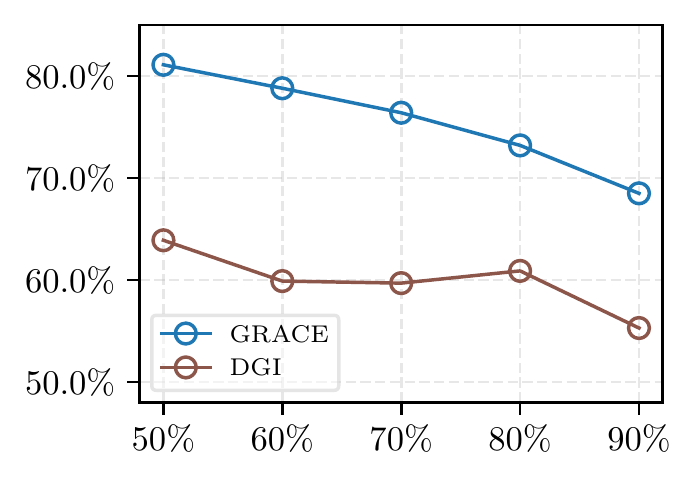}
		\label{fig:robustness-dblp}
	}
	\caption{The performance of GRACE and DGI in transductive node classification in terms of Micro-F1 on four citation datasets with a portion of node features masked under different masking rates.}
	\label{fig:robustness}
\end{figure}

\section{Detailed Proofs}
\label{appendix:proofs}

\subsection{Proof of Theorem 1}

\begin{theorem}
\label{thm:objective-InfoMax}
Let \(\mathbf{X}_i = \{ \bm{x}_k \}_{k \in \mathcal{N}(i)}\) be the neighborhood of node \(v_i\) that collectively maps to its output embedding, where \(\mathcal{N}(i)\) denotes the set of neighbors of node \(v_i\) specified by GNN architectures, and \(\mathbf{X}\) be the corresponding random variable with a uniform distribution \(p(\mathbf{X}_i) = \frac{1}{N}\). Given two random variables \(\mathbf{U, V} \in \mathbb{R}^{F^\prime}\) being the embedding in the two views, with their joint distribution denoted as \(p(\mathbf{U}, \mathbf{V})\), our objective \(\mathcal{J}\) is a lower bound of MI between encoder input \(\mathbf{X}\) and node representations in two graph views \(\mathbf{U, V}\). Formally,
\begin{equation}
	\mathcal{J} \leq I(\mathbf{X}; \mathbf{U}, \mathbf{V}).
\end{equation}
\end{theorem}

\begin{proof}
We first show the connection between our objective \(\mathcal{J}\) and the InfoNCE objective \cite{vandenOord:2018ut} , which can be defined as \cite{Poole:2019vk}
\[I_\text{NCE}(\mathbf U; \mathbf V) \triangleq \mathbb{E}_{\prod_{i} p( {\bm u}_i, {\bm v}_i)} \left[ \frac{1}{N} \sum_{i=1}^N \log \frac{e^{\theta(\bm{u}_i, \bm{v}_i)}}{\frac{1}{N}\sum_{j = 1}^{N} e^{\theta(\bm{u}_i, \bm{v}_j)}} \right], \]
where the critic function is defined as \(\theta (\bm{x}, \bm{y}) = s(g(\bm{x}), g(\bm{y}))\).
We further define \(\rho_r({\bm{u}}_i) = \sum_{j=1}^N \mathds 1_{[i \neq j]} \exp(\theta({\bm u}_i, {\bm u}_j) / \tau)\), \(\rho_c({\bm u}_i) = \sum_{j=1}^N \exp(\theta ({\bm u}_i, {\bm v}_j) / \tau)\) for convenience of notation.
Note that \(\rho_r({\bm v}_i)\) and \(\rho_c({\bm v}_i)\) can be defined symmetrically.
Then, our objective \(\mathcal{J}\) can be rewritten as
\begin{equation}
	\mathcal{J} = \mathbb E_{\prod_{i} p( {\bm u}_i, {\bm v}_i)} \left[ \frac{1}{N} \sum_{i=1}^N \log \frac {\exp(\theta( {\bm u}_i, {\bm v}_i) / \tau)} {\sqrt{\left( \rho_c( {\bm u}_i) + \rho_r({\bm u}_i) \right) \left( \rho_c( {\bm v}_i) + \rho_r({\bm v}_i) \right)}} \right].
\end{equation}
Using the notation of \(\rho_c\), the InfoNCE estimator \(I_\text{NCE}\) can be written as
\begin{equation}
	I_\text{NCE}( {\mathbf U}, {\mathbf V}) = \mathbb E_{\prod_{i} p( {\bm u}_i, {\bm v}_i)} \left[ \frac{1}{N} \sum_{i=1}^N \log \frac {\exp(\theta( {\bm u}_i, {\bm v}_i) / \tau)} {\rho_{c}( {\bm u}_i)} \right].
\end{equation}
Therefore,
\begin{equation}
	\begin{aligned}
		2\mathcal{J} & = I_\text{NCE}(\mathbf U, \mathbf V) - \mathbb E_{\prod_i p( {\mathbf u}_i,  {\bm v}_i)} \left[ \frac{1}{N} \sum_{i=1}^N \log \left( 1 + \frac {\rho_r( {\bm u}_i)} {\rho_c( {\bm u}_i)} \right) \right] \\
		& \quad + I_\text{NCE}(\mathbf V, \mathbf U) - \mathbb E_{\prod_i p( {\bm u}_i, {\bm v}_i)} \left[ \frac{1}{N} \sum_{i=1}^N \log \left( 1 + \frac {\rho_r( {\bm v}_i)} {\rho_c( {\bm v}_i)} \right) \right] \\
		& \leq I_\text{NCE}(\mathbf U, \mathbf V) + I_\text{NCE}(\mathbf V, \mathbf U). \\
	\end{aligned}
\end{equation}
According to \cite{Poole:2019vk}, the InfoNCE estimator is a lower bound of the true MI, i.e.
\begin{equation}
	I_\text{NCE}(\mathbf{U}, \mathbf{V}) \le I(\mathbf{U}; \mathbf{V}).
\end{equation}
Thus, we arrive at
\begin{equation}
	 2\mathcal{J} \leq I(\mathbf U; \mathbf V) + I(\mathbf V; \mathbf U) = 2 I(\mathbf U; \mathbf V),
\end{equation}
which leads to the inequality
\begin{equation}
\label{eq:objective-uv}
\mathcal J \le I( {\mathbf U}; {\mathbf V}).
\end{equation}

According to the data processing inequality, which states that, for all random variables $\mathbf{X}, \mathbf{Y}, \mathbf{Z}$ satisfying the Markov relation $\mathbf{X} \rightarrow \mathbf{Y} \rightarrow \mathbf{Z}$, the inequality $I(\mathbf{X}; \mathbf{Z}) \leq I(\mathbf{X}; \mathbf{Y})$ holds.
Then, we observe that $\mathbf{X}, \mathbf{U}, \mathbf{V}$ satisfy the relation $\mathbf{U} \leftarrow \mathbf{X} \rightarrow \mathbf{V}$. Since, $\mathbf{U}$ and $\mathbf{V}$ are conditionally independent after observing $\mathbf{X}$, the relation is Markov equivalent to $\mathbf{U} \rightarrow \mathbf{X} \rightarrow \mathbf{V}$, which leads to $I(\mathbf{U}; \mathbf{V}) \leq I(\mathbf{U}; \mathbf{X})$.
We further notice that the relation $\mathbf{X} \rightarrow (\mathbf{U}, \mathbf{V}) \rightarrow \mathbf{U}$ holds, and hence it follows that $I(\mathbf{X}; \mathbf{U}) \leq I(\mathbf{X}; \mathbf{U}, \mathbf{V})$.
Combining the two inequalities yields the required inequality
\begin{equation}
	\label{eq:data-processing}
	I(\mathbf U; \mathbf V) \leq I(\mathbf X; \mathbf U, \mathbf V).
\end{equation}
Following Eq. (\ref{eq:objective-uv}) and Eq. (\ref{eq:data-processing}), we finally arrive at inequality
\begin{equation}
	\mathcal{J} \leq I(\mathbf X; \mathbf U, \mathbf V),
\end{equation}
which concludes the proof.
\end{proof}

\subsection{Proof of Theorem 2}

\begin{theorem}
\label{thm:objective-triplet-loss}
When the projection function \(g\) is the identity function and we measure embedding similarity by simply taking inner product, i.e. \(s(\bm{u}, \bm{v}) = \bm{u}^\top \bm{v}\), and further assuming that positive pairs are far more aligned than negative pairs, minimizing the pairwise objective \(\ell(\bm u_i, \bm v_i)\) coincides with maximizing the triplet loss, as given in the sequel
\begin{equation}
	- \ell(\bm u_i, \bm v_i) \propto 4N \tau + \sum_{j=1}^N  \mathds 1_{[j \neq i]} \left[ \left(\| {\bm u_i} - {\bm v_i} \|^2 - \| {\bm u_i} - {\bm v_j} \|^2\right) + \left(\| {\bm u_i} - {\bm v_i} \|^2 - \| {\bm u_i} - {\bm u_j} \|^2\right) \right].
\end{equation}
\end{theorem}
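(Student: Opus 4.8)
The plan is to reduce the pairwise objective to a sum of squared-distance gaps via two first-order Taylor expansions, and then convert inner products into Euclidean distances through the polarization identity. First I would substitute the stated hypotheses $g = \mathrm{id}$ and $s(\bm u, \bm v) = \bm u^\top \bm v$, so that the critic becomes $\theta(\bm u_i, \bm v_j) = \bm u_i^\top \bm v_j$, and rewrite the negated pairwise loss by dividing numerator and denominator by the positive term $e^{\bm u_i^\top \bm v_i/\tau}$:
\[
	-\ell(\bm u_i, \bm v_i) = \log\!\left(1 + \sum_{k \neq i} e^{(\bm u_i^\top \bm v_k - \bm u_i^\top \bm v_i)/\tau} + \sum_{k \neq i} e^{(\bm u_i^\top \bm u_k - \bm u_i^\top \bm v_i)/\tau}\right).
\]

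Next I would invoke the alignment hypothesis — positive pairs are far more aligned than negative pairs, so each exponent $(\bm u_i^\top \bm v_k - \bm u_i^\top \bm v_i)/\tau$ and $(\bm u_i^\top \bm u_k - \bm u_i^\top \bm v_i)/\tau$ is a small negative quantity — to linearize. Applying $\log(1+x) \approx x$ to the outer logarithm and $e^{y} \approx 1 + y$ to each exponential collapses the expression to a constant plus a sum of inner-product gaps:
\[
	-\ell(\bm u_i, \bm v_i) \approx 2(N-1) + \frac{1}{\tau}\sum_{k \neq i}\left[(\bm u_i^\top \bm v_k - \bm u_i^\top \bm v_i) + (\bm u_i^\top \bm u_k - \bm u_i^\top \bm v_i)\right].
\]
Then I would apply the polarization identity: since the critic originates from cosine similarity, I treat the embeddings as equal-norm, so that $\bm u_i^\top \bm v_k - \bm u_i^\top \bm v_i = \tfrac12(\|\bm u_i - \bm v_i\|^2 - \|\bm u_i - \bm v_k\|^2)$ and likewise for the intra-view term, the norm factors cancelling inside each gap. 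Substituting these identities, absorbing the positive factor $2\tau$ into the proportionality, and replacing the additive constant $4\tau(N-1)$ by $4N\tau$ yields exactly the stated right-hand side, namely the sum over $j \neq i$ of the two triplet margins.

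The main obstacle is that the two linearizations sit in mild tension: $\log(1+x)\approx x$ wants the aggregate $x = \sum_{k} e^{(\cdot)}$ to be small, yet the term-wise expansion $e^{y}\approx 1+y$ makes that aggregate approximately $2(N-1)$. The correspondence is therefore heuristic rather than an exact identity — which is precisely why the statement uses $\propto$ and leans on the alignment assumption — so the care in the write-up lies in presenting the expansions as controlling the optimization-relevant (non-constant) part of the loss, and in making explicit the equal-norm assumption under which the cross terms $\tfrac12(\|\bm v_k\|^2 - \|\bm v_i\|^2)$ drop out.
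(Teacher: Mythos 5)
Your proposal is correct and follows essentially the same route as the paper's proof: divide through by the positive-pair term, apply first-order expansions of $\log(1+x)$ and $e^y$, and convert inner-product gaps to squared-distance gaps before absorbing constants into the proportionality. If anything you are slightly more careful than the paper, which writes the constant as $2$ rather than $2(N-1)$ and leaves the equal-norm assumption behind the polarization step implicit.
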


\begin{proof}
Based on the assumptions, we can rearrange the pairwise objective as
\begin{equation}
	\begin{aligned}
		- \ell(\bm{u}_i, \bm{v}_i) & = - \log \frac {\exp \left( \bm{u}_i^\top \bm{v}_{i} / \tau\right)} {\sum_{k=1}^{N}  \exp \left( \bm{u}_i^\top \bm{v}_k / \tau\right) + \sum_{k=1}^{N} \mathds 1_{[k \neq i]} \exp \left( \bm{u}_i^\top \bm{u}_k / \tau\right)} \\
		& = \log \left( 1 + \sum_{k=1}^{N} \mathds 1_{[k \neq i]} \exp\left( \frac { {\bm u}_i^\top {\bm v}_k - {\bm u}_i^\top {\bm v}_i} {\tau} \right) + \sum_{k=1}^{N} \mathds 1_{[k \neq i]} \exp\left( \frac { {\bm u}_i^\top {\bm u}_k - {\bm u}_i^\top {\bm v}_i} {\tau} \right) \right).
	\end{aligned}
\end{equation}
By Taylor expansion of first order,
\begin{equation}
	\begin{aligned}
		- \ell( {\bm u}_i, {\bm v}_i) &\approx \sum_{k=1}^{N} \mathds 1_{[k \neq i]} \exp\left( \frac { {\bm u}_i^\top {\bm v}_k - {\bm u}_i^\top {\bm v}_i} {\tau} \right) + \sum_{k=1}^{N} \mathds 1_{[k \neq i]} \exp\left( \frac { {\bm u}_i^\top {\bm u}_k - {\bm u}_i^\top {\bm v}_i} {\tau} \right) \\
	& \approx 2 + \frac 1 \tau \left[ \sum_{k=1}^{N} \mathds 1_{[k \neq i]} (  {\bm u}_i^\top {\bm v}_k - {\bm u}_i^\top {\bm v}_i) + \sum_{k=1}^{N} \mathds 1_{[k \neq i]} ( {\bm u}_i^\top {\bm u}_k - {\bm u}_i^\top {\bm v}_i) \right] \\
	& = 2 - \frac 1 {2 \tau}\sum_{k=1}^{N} \mathds 1_{[k \neq i]}  \left[ \left(\| {\bm u_i} - {\bm v_k} \|^2 - \| {\bm u_i} - {\bm v_i} \|^2 \right) + \left(\| {\bm u_i} - {\bm u_k} \|^2 - \| {\bm u_i} - {\bm v_i} \|^2 \right) \right] \\
	& \propto 4N\tau + \sum_{k=1}^{N} \mathds 1_{[k \neq i]} \left[ \left(\| {\bm u_i} - {\bm v_i} \|^2 - \| {\bm u_i} - {\bm v_k} \|^2\right) + \left(\| {\bm u_i} - {\bm v_i} \|^2 - \| {\bm u_i} - {\bm u_k} \|^2\right) \right],
	\end{aligned}
\end{equation}
which concludes the proof.
\end{proof}

\small
\bibliographystyle{unsrtnat}
\bibliography{neurips_2020}

\end{document}